\newtheorem{prop}{Proposition}
\newtheorem{lem}{Lemma}
\newcommand{\minihead}[1]{{\vspace{.4em}\noindent\textbf{#1.} }}
\newcommand{\colorred}[1]{{\textcolor{red}{#1}}}
\newcommand{\pref}[0]{p_\mathrm{ref}}
\newcommand{\pmodel}[0]{\hat{p}}
\newcommand{\truncset}[1]{\mathcal{P}_{c,#1}}
\newcommand{\ptrunc}[0]{p_{\mathrm{t}}}
\newcommand{\KL}[0]{\text{KL}}
\newcommand{\EE}[0]{\mathbb{E}}
\newcommand{\SM}[0]{\S}
\title{Improved Natural Language Generation via Loss Truncation}
\author{Daniel Kang, Tatsunori B. Hashimoto}
\date{}
\begin{document}
\maketitle

\begin{abstract}

Neural language models are usually trained to match the distributional
properties of large-scale corpora by minimizing the log loss. While straightforward
to optimize, this approach forces the model to reproduce all variations in the
dataset, including noisy and invalid references (e.g., misannotations and
hallucinated facts). Even a small fraction of noisy data can degrade the performance of log loss.
As an alternative, prior work has shown that minimizing the distinguishability of generated samples
is a principled and robust loss that can handle invalid references.
However, distinguishability has not been used in practice due to challenges in optimization and estimation.
We propose loss truncation: a simple and scalable procedure which adaptively removes high log loss examples 
as a way to optimize for distinguishability.
Empirically, we demonstrate that loss truncation outperforms existing baselines on
distinguishability on a summarization task. Furthermore, we show that samples generated by the loss truncation model have factual accuracy ratings that exceed those of baselines and match human references.

\end{abstract}

\section{Introduction}
\label{sec:intro}

Learning to generate text is a core part of many NLP tasks, including
summarization \cite{nallapati2016abstractive}, image captioning
\cite{lin2014microsoft}, and story generation \cite{roemmele2016writing}. A
common challenge to all these tasks is that references from the training distribution
are not unique and contain substantial variations in phrasing and content
\cite{wiseman2017challenges, dhingra2019handling}. Learning to generate under a
set of diverse and noisy references is challenging as some variations
ought to be learned (e.g., paraphrasing) while others should not (e.g.,
hallucinated facts, ignoring prompts). 

Existing training procedures for models seek to match the underlying
distribution, leading to models that replicate and sometimes even amplify
unwanted behaviors such as hallucination during generation. For example, neural
language models often produce fluent text that is unfaithful to the source
\cite{tian2019sticking, wiseman2017challenges, lee2018hallucinations}. Existing
work \cite{fan2018hierarchical, holtzman2019curious} has primarily addressed these issues by constructing decoders that
implicitly remove unwanted variation when generating (see
\SM\ref{sec:related-work} for a detailed discussion of task-specific losses).

In this work, we argue that this phenomenon is not model specific, but is due to
the widely-used log loss: we demonstrate that log loss is not robust to noisy
and invalid references (\SM\ref{sec:problem-statement}). In particular, log loss
requires that models assign probabilities to \textbf{all} potential test
reference sequences. As a result, log loss is sensitive to outliers: invalid or
noisy references with small probability mass can cause large changes in model
behavior. We show that the brittleness of log loss, together with the noise in
existing generation datasets, lead to low-quality and unfaithful generated text. 

Instead of optimizing log loss, which has little correlation
with model output quality \cite{theis2016note, hashimoto2019unifying,
gamon2005sentence}, recent work on diverse generation models has proposed optimizing for the \emph{distinguishability} of samples from the model and the reference.
Distinguishability provides a natural and appealing
guarantee: samples that are indistinguishable from human generated text will be as
high quality as human generated text. Furthermore, we show that optimizing for distinguishability
is robust in the face of noisy and even invalid data. Despite its appeal, distinguishability
has not been widely used due to statistical and computational challenges. For
example, existing methods that directly optimize for distinguishability have yet to match even naive log loss based baselines \cite{caccia2018language}.

We propose a modification to the log loss, \emph{loss truncation}, that has
the benefits of distinguishability while being efficient to train. Loss truncation is as efficient
to train as log loss, nearly as robust as distinguishability, and provides
distinguishability guarantees via an upper bound. It achieves these properties
by modifying the standard log loss to adaptively remove examples with high log loss. We additionally extend loss
truncation with a \emph{sequence-level} rejection sampling scheme that generates
higher quality sequences by restricting the outputs to be high probability
sequences.


We show that loss truncation with direct and rejection sampling outperforms
standard log loss based generation methods (beam search, full sampling, top-$k$, and
top-$p$ sampling) on distinguishability, as measured by the HUSE score
\cite{hashimoto2019unifying}. We additionally study the factual accuracy of a
summarization system trained on loss truncation and show that our proposed
approach produces summaries which improve upon all baselines (including beam
searched models) and match references on factual accuracy.

\section{Motivation and Problem Statement}
\label{sec:problem-statement}




\minihead{Task and Background}
We consider a natural language generation task with a \emph{conditional
language model}, where we are given a context $x$ drawn from $p(x)$ and our
probabilistic model $\pmodel(y\mid x)$ produces an output $y$ by
approximating a (usually human) reference distribution $\pref(y|x)$.

In order to achieve this, many existing models are trained to minimize the
Kullback-Leibler (KL) divergence,
\begin{align}
\KL(\pref|| \pmodel) = \underbrace{-E_{\pref}[\log \pmodel]}_{\text{log loss}} +  \underbrace{E_{\pref}[\log \pref]}_{\text{negentropy}}. \label{eq:kl}
\end{align}
We refer to the first term of this divergence as the \emph{log loss} of a model.
The second term is commonly ignored as it is a constant with respect to the model.
Minimizing the log loss
has several practical benefits: 1) it is written as an expected loss (and is thus
straightforward to optimize via stochastic gradient descent), 2) it factorizes
across tokens in autoregressive modeling, and 3) it provides a guarantee on a
model's goodness of fit (Eq~\eqref{eq:kl}).

Unfortunately, log loss also suffers from several drawbacks. It is known to have little correlation with a model's sample quality and it can be brittle to invalid references in the training data.

\minihead{Log loss is not robust to noise}
The KL divergence has intuitively correct behavior when each input $x$ has a single correct
reference $y$: it will maximize the probability of the
single correct reference. However, log loss can be problematic when there are
multiple correct references, of which some are invalid or difficult to model.

\begin{figure}
  \includegraphics[width=\columnwidth]{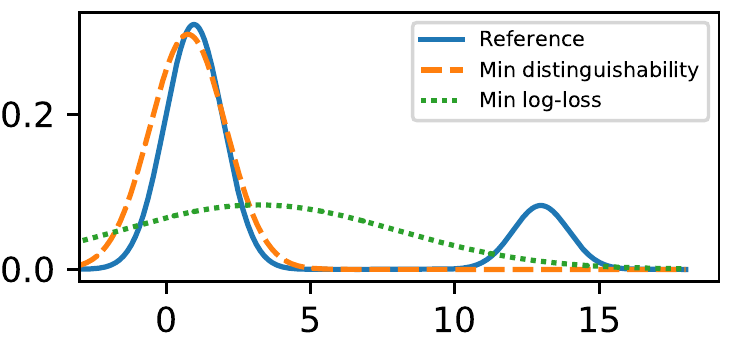}
  \caption{Fitting a mixture of Gaussians with a single Gaussian using
  distinguishability (TV) and log loss (KL). As shown, log loss is extremely sensitive to
  outliers, resulting in poor estimation.}
  \label{fig:log-loss-robust}
\end{figure}

In particular, log loss is sensitive to invalid or noisy data because it requires
that the model assign high probabilities to \emph{all} potential references. Log loss is
unbounded above: a model assigning zero probability to even a single reference
makes the model incur an infinite overall loss.

We show a well-known example of this behavior with synthetic data. We
consider fitting a single Gaussian to a
mixture of two Gaussian in Figure~\ref{fig:log-loss-robust}. The reference distribution (blue)
has a valid set of references at zero as well as variation that the model does
not expect (e.g., invalid or noisy references) on the right. Minimizing the log
loss results in a suboptimal model that is forced to span both groups.
Furthermore, post-hoc processing the model does not help, as even the most
likely output under the log loss trained model (\texttildelow3) has low
probability under the reference distribution.

In natural language generation, training sets can contain invalid or poor
quality references. As such, these types of problems manifest themselves in tasks such as summarization (hallucinating facts), story generation (ignoring prompts and constraints), and captioning (ignoring parts of the image).

Much of the existing literature on faithful generation has focused on designing better models for \emph{valid} references (via copying or attention constraints), but the example in Figure~\ref{fig:log-loss-robust} shows that this alone may not be sufficient. The Gaussian `model' in this case perfectly fits the mixture component at zero but is still brittle because it cannot simultaneously fit the other group of (invalid) samples.
Resolving this will require either a model which is designed explicitly to capture \emph{invalid} references or a loss function that can ignore them.

\paragraph{Case Study: Hallucination in Summarization}


We show that low-probability reference sequences (e.g.,
Figure~\ref{fig:log-loss-robust}) are pervasive by examining the Gigaword
summarization dataset \cite{rush2017neural}. We manually classified 300 titles
into two categories: 1) requires hallucinating new facts and 2) directly
entailed from the context. We show an example of a reference that requires hallucination
in Figure~\ref{fig:new-fact-ex}. In this example, a model that assigns
high probability to the new fact (Thursday) must also frequently hallucinate
dates on other examples. 

\begin{figure}[t!]
  \begin{mdframed}
  \textbf{Context:} For the first time in five years, Microsoft corp.~is finally
  unveiling a new system for operating personal computers. \\
  \textbf{Title:} Microsoft Makes \colorred{Long-Awaited} Software Upgrade Available to
  \colorred{Businesses Thursday}.
  \end{mdframed}
  \vspace{-0.5em}
  \caption{Example of an article title from the Gigaword dataset
  that requires hallucinating new facts such as `Thursday' (colored red).}
  \label{fig:new-fact-ex}
  \vspace{-0.5em}
\end{figure}

We show the fraction of examples in each category in
Table~\ref{table:hallucination-fraction}. As shown, \emph{35\%} of titles
require hallucinating new facts. Others have found this phenomenon to be
pervasive in other datasets \cite{kryscinski2019neural}, including the CNN/DM
dataset \cite{see2017get}.

Studying the log loss of these examples\footnote{The log loss was computed from
a standard language model, see \SM\ref{sec:eval} for details.}, we note that the average log loss of
titles that require new facts is over 1.7$\times$ the average loss of the titles
that are directly entailed (Table~\ref{table:hallucination-fraction}) and the
high-loss examples are clearly dominated by examples which require hallucination
(Figure~\ref{fig:loss-hist}). In fact, we find that over 80\% of examples with
greater than 40 log loss requires some form of hallucination. 

These statistics are similar to the toy example we presented earlier in Figure~\ref{fig:log-loss-robust}. A small but nontrivial fraction of invalid and unexpected data
force the model to incur high losses. Much like in the earlier example, we can see that
a model which aims to have low log loss on this dataset must spend a substantial amount of effort learning to hallucinate.

\begin{table}[t!]
  \centering
  \small
  \begin{tabular}{l|c|c}
    & New facts & Directly entailed \\
    \hline
    Percent & 35\%   & 65\% \\
    Avg. log loss & 34.3 & 20.5
  \end{tabular}
  \vspace{-0.5em}
  \caption{Fraction of the data and log loss of titles that require
  hallucinating new facts (left column) and titles that are entailed from the
  context (right column). As shown, \emph{35\%} of titles require
  hallucinating new facts and the average log loss of titles requiring new facts
  is over 1.7$\times$ the loss of the directly entailed sequences.}
  \label{table:hallucination-fraction}
\end{table}

\begin{figure}
  \includegraphics[width=\columnwidth]{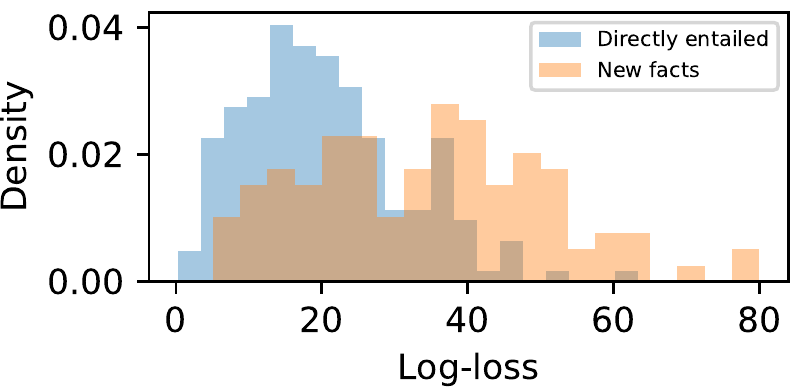}
  \vspace{-0.5em}
  \caption{Normalized histogram of log losses for titles that require
  hallucinating new facts compared to those that can be directly entailed. As
  shown, titles requiring new facts incur significantly higher loss and more
  than 80\% of examples with greater than 40 log loss require hallucinating new
  facts.}
  \label{fig:loss-hist}
\end{figure}


\minihead{Distinguishability}
Given that large-scale data will inevitably contain annotation errors and noise,
we might ask whether there are effective alternatives to the KL divergence for
training models. The distinguishability of samples from a model compared to the
reference is one such objective. Distinguishability has recently gained attention as a way to learn and evaluate models based on both sample quality and diversity \cite{hashimoto2019unifying, zhou2019hype,
zellers2019defending, gehrmann2019gltr}. We show that this objective also
serves as a naturally robust alternative to the KL divergence for learning
language models. Unfortunately, directly optimizing for distinguishability
(e.g., via generative adversarial networks) is challenging \cite{caccia2018language} and we show this
works poorly in practice (\SM\ref{sec:eval}).

Distinguishability is defined as the error rate of an optimal classifier which
seeks to distinguish samples from both the model and reference, and we will
formally define this via the mixture
\[
  y | x, z \sim
    \begin{cases}
      \pref(y|x) & \textrm{if } z = 1 \\
      \pmodel(y|x) & \textrm{if } z = 0
    \end{cases}
\]
where $z \sim \textrm{Bernoulli}\left(\frac{1}{2}\right)$. We can now define $L^*$ to be twice the optimal error in identifying samples from the model
\begin{equation}
  L^* :=
    2 \inf_{f\in{\mathcal{X}\times \mathcal{Y}\to[0,1]}} \mathbb{P}[f(x, y) \neq z]\label{ref:eqloss}
\end{equation}
Our measure of distinguishability, the \emph{total variation (TV) distance}, is a linear function of this error
\[
|\pmodel - \pref|_{TV} = 1 - L^*
\]
where $\pmodel$ and $\pref$ refer to the joint distributions $\pmodel(y|x)p(x)$
and $\pref(y|x)p(x)$ for brevity.
Note that distinguishability is inherently \emph{robust} to the addition of
\emph{any} small fraction of noisy data \cite{donoho1988automatic}. Unlike the log loss,
the model's loss on an example for TV is upper bounded by $1$
(Eq~\ref{ref:eqloss}). We show an example of TV's robustness in
Figure~\ref{fig:log-loss-robust}, where a small amount of noise does not
substantially affect the learned distribution.

\minihead{Log loss as a surrogate for distinguishability}
Distinguishability is both robust and provides sample quality guarantees, but is
challenging to optimize \cite{caccia2018language}. One approach to optimize for
distinguishability is to find an appropriate \emph{surrogate loss} which serves
as an upper bound. This is analogous to the use of logistic or hinge losses as a
way to optimize for classification accuracy. For log loss, \emph{Pinsker's
inequality} \cite{csiszar2011information} relates the KL divergence and
distinguishability as
\begin{equation}
|\pmodel - \pref|_{TV}^2 \leq \frac{1}{2} \cdot \KL(\pref || \pmodel). \label{eq:pinsker}
\end{equation}
This explains the empirical success of log loss in low-uncertainty situations, where $\KL$ is sufficiently small and this bound becomes tight.

Our approach will be to modify the log loss slightly by truncating the
distribution. This truncated loss will be as easy to optimize as log loss, while
being more robust and providing a tighter variant of Pinsker's inequality.






\section{Loss Truncation}
\label{sec:technique}



\minihead{Intuition}
We would like the model to ignore data that would force it to
unnecessarily hallucinate at test time. Concretely, recall the toy example 
(Figure~\ref{fig:log-loss-robust}); there is a set of invalid references that force the model
to be degenerate. If we could remove these these invalid references by
truncating the distribution, the resulting model would be high quality.
We can show that this intuition
is theoretically justified, and that truncating (i.e., removing) an appropriate $c$-fraction of the data provides tighter bounds on the distinguishability of the model.

\minihead{Improved log losses for distinguishability}
We will demonstrate that log loss with an appropriate $c$-fraction of the data removed provides guarantees on distinguishability.
We will define the set of \emph{truncated} distributions as the set of distributions with any $c$-fraction of data removed
\[\truncset{p} := \left\{q_0: p = (1-c)q_0 + c q_1 \text{ for some }q_1 \right\}.\]
A simple lemma shows that that all elements in $\truncset{p}$ are $c$-close to $p$
in TV (Appendix~\ref{sec:proofs}).

Now we state our main result,
\begin{prop}
    For any $c \in [0,1]$ and $\ptrunc \in \truncset{\pref}$,
    \[|\pmodel - \pref|_{\text{TV}}^2 \leq \frac{1}{2}\KL(\ptrunc || \pmodel) + 2c + c^2\]
  \end{prop}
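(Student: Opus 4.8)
The plan is to reduce the statement to the ordinary Pinsker inequality (Eq~\eqref{eq:pinsker}) applied to the pair $(\ptrunc,\pmodel)$, combined with the triangle inequality for total variation and the auxiliary lemma that controls $|\ptrunc-\pref|_{TV}$. First I would invoke that lemma, which guarantees that every truncated distribution $\ptrunc\in\truncset{\pref}$ is $c$-close to $\pref$, i.e.\ $|\ptrunc-\pref|_{TV}\le c$ (concretely, writing $\pref=(1-c)\ptrunc+cq_1$ gives $\pref-\ptrunc=c(q_1-\ptrunc)$, and the TV distance of the two distributions in the bracket is at most $1$). Together with the triangle inequality this yields
\[
|\pmodel-\pref|_{TV}\le |\pmodel-\ptrunc|_{TV}+|\ptrunc-\pref|_{TV}\le |\pmodel-\ptrunc|_{TV}+c.
\]

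Next I would square both sides and expand, producing three terms: the quadratic $|\pmodel-\ptrunc|_{TV}^2$, a cross term $2c\,|\pmodel-\ptrunc|_{TV}$, and $c^2$. For the quadratic term I apply Pinsker's inequality to $(\ptrunc,\pmodel)$, using symmetry of total variation, to obtain $|\pmodel-\ptrunc|_{TV}^2\le\tfrac12\KL(\ptrunc\,||\,\pmodel)$, which is exactly the leading term of the claimed bound. The cross term is the only remaining piece.

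The key step---and the one I would be careful about---is how to handle the cross term $2c\,|\pmodel-\ptrunc|_{TV}$. Rather than keeping it tight, I bound it crudely using the fact that the total variation distance is always at most $1$, so $2c\,|\pmodel-\ptrunc|_{TV}\le 2c$. This loose estimate is precisely what produces the clean additive constant $2c$, as opposed to the cross term $2c\sqrt{\tfrac12\KL(\ptrunc\,||\,\pmodel)}$ that a tighter treatment would leave behind. Assembling the three pieces gives $|\pmodel-\pref|_{TV}^2\le\tfrac12\KL(\ptrunc\,||\,\pmodel)+2c+c^2$, as desired. I do not expect a serious technical obstacle here; the only genuine decision is to deliberately choose the weak $|\cdot|_{TV}\le 1$ bound on the cross term instead of over-optimizing it, since that is what makes the final constant match the statement.
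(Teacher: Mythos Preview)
Your proposal is correct and follows essentially the same route as the paper: triangle inequality for TV, the lemma that $|\ptrunc-\pref|_{TV}\le c$, Pinsker's inequality on $|\pmodel-\ptrunc|_{TV}^2$, and the crude bound $|\pmodel-\ptrunc|_{TV}\le 1$ on the cross term. The only cosmetic difference is that you substitute the lemma's bound $c$ before squaring, whereas the paper squares the full triangle inequality and then bounds all terms at once; the resulting estimates are identical.
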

See Appendix~\ref{sec:proofs} for the proof.
Namely, distinguishability is bounded by the log loss with respect to the
truncated distribution and a small constant. Furthermore, this upper bound is valid for
\emph{any} $c$, although different $c$ will change the tightness of the bound and
produce different models.

This truncated bound can be substantially tighter than Pinsker's inequality.
Consider for example a model that can perfectly capture $(1-c)$ fraction of
the data, but $c$-fraction of the reference outputs cannot be generated by the
model and receive probability zero. In this case, the distinguishability
(TV) is $c$, the KL divergence is \emph{infinite}, while our
truncated bound is $\sqrt{c^2+2c}$. This suggests that appropriately
truncating high-loss examples makes log loss robust and allows us to use log loss as a surrogate for distinguishability, even in the presence of invalid and noisy references.


\minihead{Loss truncation}
Given that the log loss on any $c$-fraction of the data is a surrogate loss for distinguishability (Eq~\eqref{eq:bound}), a key parameter to optimize is the
truncated distribution $\ptrunc$. An oracle
solution would exhaustively search over $\ptrunc$ and which data to drop. However, exhaustively
searching through $\truncset{\pref}$ is a combinatorial optimization problem and
infeasible. Our approach will be to optimize $\ptrunc$ with a heuristic. The
truncated objective takes the form of a log loss and negative entropy term,
\[-\EE_{\ptrunc}[\log\pmodel(y\mid x)] + \EE_{\ptrunc}[\log\ptrunc(y\mid x)]\]
and we will select $\ptrunc$ by dropping the examples with the highest log loss,
treating the negative entropy term as being upper bounded by zero.

This heuristic is straightforward to compute, provides an upper bound on distinguishability, and matches our earlier observation that high-loss examples are correlated with invalid examples we would like the model to ignore (see Table~\ref{table:hallucination-fraction}).

\begin{figure}
  \includegraphics[width=\columnwidth]{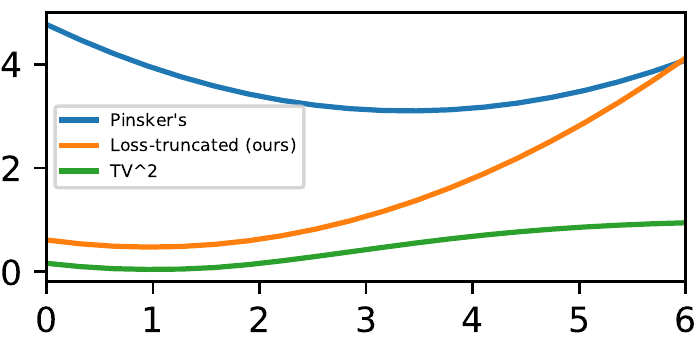}
  \caption{Pinsker's inequality, our bound, and the total variation
  squared of parameter estimates for different parameter estimates ($c=0.2$). As
  shown, loss truncation can significantly improve bounds over Pinsker's
  inequality and, in this case, has a nearly identical minimizer to directly
  minimizing total variation.}
  \label{fig:bound-tightness}
\end{figure}

As an example of how our heuristic can improve estimation and tightness in bounds,
consider the earlier toy example in Figure~\ref{fig:log-loss-robust}. In this
example, we find the optimal mean for a single Gaussian with fixed variance
which fits mixture of two Gaussians. Figure~\ref{fig:bound-tightness} shows the
objective function value implied by the TV loss, log loss (Pinsker's bound), and our $c$-truncated bound
as a function of the Gaussian mean.
We find that log loss
provides an upper bound on distinguishability (via Pinsker's inequality) but is
loose and results in a low quality estimate. In contrast, $c$-truncation
results in a nearly identical minimizer as directly minimizing TV.

\section{Implementing Truncation}
\subsection{Training}

Our algorithm has three components at training time. First, it trains a model on
all the data using standard hyperparameters, which we refer to as
``hotstarting'' the model. Second, it tracks a running estimate of the
$1-c$ quantile of the losses during training. Third, it performs gradient updates on examples that
are below the current $1-c$ quantile estimate. We present
the pseudocode in Algorithm~\ref{alg:loss-trunc} and describe each step in
detail below.\footnote{Our code is available at
\url{https://github.com/ddkang/loss_dropper}.}

\minihead{Hotstarting}
First, our algorithm hotstarts the model (\texttt{hotstart}($M$) in
Alg.~\ref{alg:loss-trunc}) by training with the standard log loss.
Hotstarting address two challenges in optimizing the truncated loss. First,
losses are uninformative at the start of training so truncating examples based
on these losses will result in dropping valid examples. We have
empirically found that truncating after hotstarting primarily drops invalid
references, which avoids this problem.
Second, hotstarting allows the model to transfer information from the entire
dataset to the clean $1-c$ fraction of the data. Examples that cause a model to
hallucinate may still contain valid information about the fluency of a sentence, which hotstarting
can capture. This is effectively pretraining our model on the
entire data before learning to generate on the clean subset. We have found this
procedure to be effective in practice.

\minihead{Quantile estimation}
Second, our algorithm keeps track of the $1-c$ quantile over
the distribution of losses. For each new minibatch $B$, we update an online
estimate of the $1-c$ quantile (\texttt{estimateQuantile}($M,B$) in
Alg.~\ref{alg:loss-trunc}). To estimate this quantile, our algorithm
constructs a histogram over the last 10,000 examples seen during training and
estimates the empirical $1-c$ quantile every 10,000 examples.\footnote{For
datasets with fewer than 10,000 examples, we can perform this procedure over the
entire dataset.} 

\minihead{Loss dropping}
Third, our algorithm will perform minibatch stochastic gradient descent while excluding
examples that have losses above the current top $1-c$ quantile estimate $q$
(\texttt{truncatedUpdate}($M, B, q$) in Alg.~\ref{alg:loss-trunc}). Dropping can be accomplished in automatic differentiation packages
(e.g., Tensorflow and PyTorch) by setting the loss on the given example to zero.

\subsection{Generating High-Probability Samples}
Thus far, our goal has been to robustly learn the
underlying distribution. However, in some cases, a user may wish to only
generate high confidence sequences, which will ideally correspond to high
quality sequences.

To generate such samples, we propose \emph{sequence-level rejection sampling}.

Recall that our truncation heuristic selects for the $1-c$ quantile of the distribution. For a user-defined level $\alpha$, our rejection sampling scheme will aim to
generate samples from the $1-c \cdot \alpha$ quantile.

To perform rejection sampling, given a model and a user-defined rejection level
$\alpha$, we first sample $N$ sequences (e.g., titles in a
summarization task). Then, we sample a random sequence from the $\alpha \cdot N$
smallest samples as measured by log loss. Ideally, this procedure will return a
sample in the $1-c \cdot \alpha$ quantile of $\pref$.


We show that rejection sampling can outperform baselines in generating factual
summaries (\SM\ref{sec:eval}). We further show examples of selected and
rejected samples in Appendix~\ref{sec:generation-ex}.

\begin{algorithm}[t]
  \KwData{Model $M$, $c$ fraction to drop, $T$ iterations}
  $M \gets $ \texttt{hotstart}($M$) \;
  \For{$i \gets 0$ \KwTo $T$ }{
    $B \gets \mathrm{\texttt{minibatch()}}$ \;
    $\mathrm{q=\texttt{estimateQuantile}}(M, B) $ \;
    $\mathrm{M=\texttt{truncatedUpdate}}(M, B, q)$\;
  }
  \caption{The proposed loss truncation procedure with three components (see main text for details for each component).}
  \label{alg:loss-trunc}
\end{algorithm}

\begin{table*}[t!]
  \centering
  \small
  \begin{tabular}{c|c|c|c|c|c|c|c}
    & Loss trunc. & Trunc+reject ($\alpha=0.1$) & Full samp. & Beam & top-$k$ ($k=100$) & top-$p$ ($p=0.9$)  & GAN \\
    \hline
    HUSE   & \textbf{0.58} & 0.04 & 0.55 & 0.04 & 0.32 & 0.32 & 0.003 \\
    HUSE-D & 0.88 & 0.12 & \textbf{0.98} & 0.18 & 0.59 & 0.65 & 0.25 \\
    HUSE-Q & 0.70 & \textbf{0.92} & 0.58 & 0.86 & 0.73 & 0.67 & 0.75
  \end{tabular}
  \vspace{-0.5em}
  \caption{HUSE, HUSE-D, and HUSE-Q scores for loss truncation and baselines. As
  shown, loss truncation outperforms all baselines on HUSE score.}
  \label{table:huse-fu}
\end{table*}

\section{Evaluation}
\label{sec:eval}

\subsection{Experimental Setup}

\minihead{Dataset and Task}
We primarily evaluate loss truncation on abstractive summarization in the form of
generating news headlines from an article. We selected this task to highlight that loss truncation can improve sample quality and factual accuracy, while also achieving the secondary goal of diversity for abstractive systems \cite{see2017get,kryscinski2019neural}.

We evaluated on the Gigaword summarization task \cite{rush2017neural} as in
\citet{gehrmann2018bottom}. While there are other summarization datasets, we
chose Gigaword for the following reasons. First, it is large enough that sample quality defects are not caused by a lack of data.  Second, the dataset is
structured so that neither model nor computation is the bottleneck in
performance: the standard sequence-to-sequence models are competitive on the
Gigaword dataset. Third, while Gigaword dataset is known to have noise, this matches the behavior of existing annotation errors \cite{beigman2009learning, klebanov2010some} and uncertainty \cite{kryscinski2019neural}.

To show that loss truncation is applicable beyond summarization, we also
performed a preliminary evaluation of our approach on the E2E NLG task. In E2E,
the goal is to generate restaurant reviews from meaning
representations~\cite{dusek2019e2e}.

\minihead{Model and Baselines}
We used a standard LSTM architecture with global attention for summarization
that has been used for the Gigaword summarization task in the past
\cite{gehrmann2018bottom}. The learning rate and hyperparameters are given in
Appendix~\ref{sec:hyperparameters}. For the E2E task, we use a standard model
with the exact settings as in \citet{puzikov2018e2e}.

For loss truncation on Gigaword, we used $c=0.6$. We matched the total number of training
steps when training via loss truncation (including the hotstart) and standard
log loss. We sampled from the full model distribution for loss truncated models
except when rejection sampling.

As baselines on Gigaword, we generate from the log loss trained language model using several
decoders that have been reported to mitigate low-quality outputs such as beam
search, top-$k$ sampling \cite{fan2018hierarchical}, and top-$p$ sampling
\cite{holtzman2019curious}. We also evaluate directly sampling from the
probabilistic model in order to estimate overall distinguishability and
understand the diversity-quality trade-offs of each model. 

Finally, on Gigaword, we also compared against a recent generative adversarial
network (GAN) model with a publicly available
implementation \cite{wang2018learning}.

\minihead{Human-evaluation metrics}
We evaluate whether loss truncation improves model distinguishability on
summarization by
measuring the HUSE estimator for TV \cite{hashimoto2019unifying}.
HUSE measures distinguishability by learning a classifier over the
log-probabilities and human evaluation scores over both samples from the model
and references. We also use HUSE to evaluate the quality-diversity tradeoffs of
the models by estimating both HUSE-Q (which measures quality via human judgement) and HUSE-D (which measures diversity via statistical evaluation).

In order to assess whether this leads to improvements in the faithfulness of samples, we  measure whether loss truncation reduces the number of \emph{factually inaccurate} outputs from the model via a crowdsourced survey. We designed our prompt based on earlier factual accuracy human evaluation \cite{novikova2017we} and measured whether the original article contained all of the information given in the generated title.

We describe the crowd worker setup in Appendix~\ref{sec:worker-prompts}.

\minihead{Automated metrics}
While human evaluation is our primary metric of evaluation as it is considered
gold-standard, we additionally evaluate on automated metrics to contextualize
our human evaluation results. We measure ROUGE-L \cite{lin2003automatic} for
summarization and BLEU score \cite{papineni2002bleu} for E2E.

\subsection{Loss Truncation Outperforms Baselines on HUSE}

Using the HUSE score to measure the TV distance, we assessed whether loss truncation successfully improved our model
in terms of distinguishability compared to log loss. As shown in
Table~\ref{table:huse-fu}, loss truncation outperforms all baselines on HUSE
score (including the original log loss model \texttt{Full samp}), suggesting the
truncated model is a better language model than the log loss model as measured
by distinguishability.

We find that that loss truncation improves over the log loss by increasing the
generation quality (HUSE-Q) by 12\% without substantially lowering diversity
(e.g., memorizing examples from the training set). These results affirmatively
answers an open question posed by \citet{hashimoto2019unifying} on whether it is
possible to obtain models that improve the quality while maintaining overall
distinguishability compared to log loss trained models. Post-hoc modification of
the log loss model's distribution by removing unlikely words using either
top-$k$ or top-$p$ sampling result in substantial losses in HUSE due to losses
in diversity.

We further considered matching the entropy of the loss truncation model with
top-$k=100$ and top-$p=0.9$ (Appendix~\ref{sec:hyperparameters}). At a fixed
entropy, loss truncation can outperform on HUSE by up to 26\%.

Comparing models with high sample quality, loss truncation with rejection
sampling improves upon all baselines (including beam search) in terms of raw
human quality evaluation (HUSE-Q), and we see that the Pareto frontier of
truncation and rejection sampling (which can be achieved via ensembling)
dominates the baselines on \emph{both} quality and diversity
(Figure~\ref{fig:huse-2d}).
Rejection sampling decreases overall HUSE score because it is designed to only
return high quality samples (i.e., high HUSE-Q): this comes at the cost of
reduced diversity, so overall HUSE score suffers.

The results amongst our baselines recapitulate known results for the
quality-diversity tradeoffs of existing methods. Beam search has high sample
quality, but low diversity; top-$k$ and top-$p$ samplers provide diversity gains
over beam search; and GANs generally underperform
well-tuned log loss based models on both diversity and quality.

\subsection{Loss Truncation with Rejection Sampling Produces High Quality
  Outputs}

We now ask whether improvements in distinguishability (as measured by HUSE) for the loss truncation
model translate to practical improvements in sample quality, such as the factual accuracy
of generated outputs in summarization. We evaluate this through a crowdsourced study on factual accuracy.


Since we are interested in studying whether our model can produce high quality
samples, we used rejection sampling with  $\alpha = 0.1$ to obtain high-quality samples
from the model. We compare this to the log loss model with baseline decoders. For
the top-$p$ and top-$k$ sampling decoders that have quality-diversity tradeoffs, we
select $k$ and $p$ such that the entropy of the sampling distribution matches our rejection
sampling approach (see Appendix~\ref{sec:hyperparameters} for details). 

\begin{figure}[t]
  \includegraphics[width=\columnwidth]{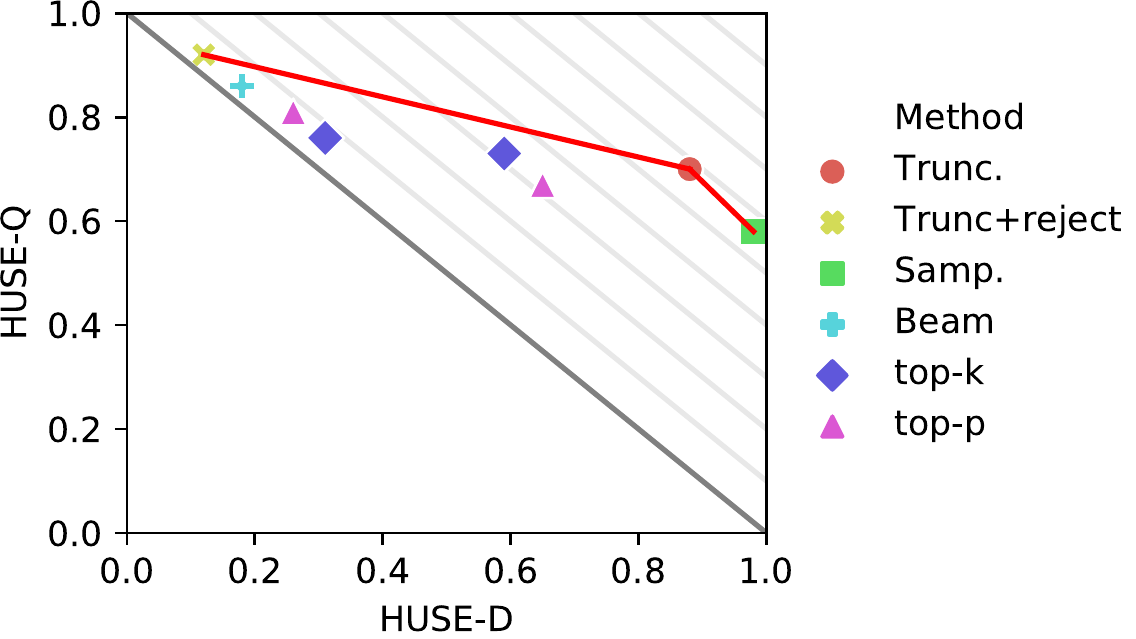}
  \caption{HUSE-D vs HUSE-Q for loss truncation, truncation + rejection sampling, and
  baselines. The red line shows the best achievable frontier via ensembling. Truncation and rejection outperform all baselines.}
  \label{fig:huse-2d}
\end{figure}


\begin{table}[t!]
  \centering
  \small
  \begin{tabular}{l|l}
    Condition & Mean score \\ \hline
    Human & \textbf{3.63} $\pm$ 0.05 \\
    \hline
    Truncation + Rejection ($\alpha=0.1$) & \textbf{3.79} $\pm$ 0.06 \\
    Beam & 3.51 $\pm$ 0.05 \\
    top-$p$ ($p=0.4$) & 3.42 $\pm$ 0.05 \\
    top-$k$ ($k=2$)   & 3.29 $\pm$ 0.05 \\
    Sampling          & 2.96 $\pm$ 0.05
  \end{tabular}
  \vspace{-0.5em}
  \caption{Mean scores and standard errors of factuality in generated news
  titles given articles.  As shown, rejection sampling outperforms all
  baselines and matches the human reference score.}
  \label{table:summ-fact}
\end{table}

To measure factual accuracy, we asked crowd workers how much information in the
generated titles was contained in the article in a similar fashion to
\citet{novikova2017we}. Table~\ref{table:summ-fact} shows the average factual
accuracy rating for each model. We find that rejection sampling outperforms
\emph{all} baselines, including the current gold standard of beam search, and
matches the human reference level of factual accuracy.

Although it may seem surprising that loss truncation and rejection sampling
together can achieve the same factual accuracy score as humans, recall that over
34\% of the dataset consists of titles which have facts that are not contained
in the article. The loss truncation approach biases the model towards learning
only the easily predicted (and likely factually accurate) titles.

\subsection{Loss Truncation Produces Diverse Outputs}

\begin{table*}[t!]
  \centering
  \small
  \begin{tabularx}{\linewidth}{l|X}
    Method & Example \\
    \hline
    \hline
    Context & at least \#\# people have been killed and more than \#\#,\#\#\# made homeless by floods that swept across southern africa in the past week , striking a region already grappling with severe food shortages . \\
    \hline
    Gold & floods kill \#\# in famine-hit southern africa \\
    \hline
    \hline
    Loss truncation & at least \#\# people killed \#\#,\#\#\# evacuated in floods in southern african region \\
                    & floods that sweep parts of africa kill at least \#\# \\
    \hline
    Beam & flooding hits southern africa as deaths rise \\
    \hline
    Full sampling & \colorred{child farming} stalls in southern africa \\
                  & \colorred{earthquake} kills \#\# in southern africa \\
    \hline
    top-$p$ ($p=0.9$) & \colorred{torrential rains} prompt warnings in southern africa \\
                      & toll nears \#\# in southern africa \\
    \hline
    top-$k$ ($k=2$)   & at least \#\# killed \#\#,\#\#\# homeless in southern africa floods \\
                      & at least \#\# dead \#\#,\#\#\# homeless as floods hit southern africa \\
  \end{tabularx}
  \vspace{-0.5em}
  \caption{Examples of generations for various baselines and loss truncation
  (two replicates shown for sampled outputs). As shown, loss truncation can
  achieve diverse and high quality outputs. In contrast, baselines either are
  not diverse (beam, top-$k$) or poor quality (full sampling, top-$p$). We color
  incorrect facts in red.}
  \label{table:trunc-examples}
\end{table*}

Finally, one of the benefits of optimizing for distinguishability is that it naturally
optimizes for both diversity and quality. Manually examining outputs from the models, we
find that directly sampling from the loss truncated model often produces high quality and diverse outputs. We show examples of
generated outputs for baselines and loss truncation in
Table~\ref{table:trunc-examples}. Loss truncation uses different
phrasings (`at least \# killed', and `floods sweep') while top-$k$ follows a
nearly templated pattern with a few changes to the words which appear. Top-$p$ and
direct sampling both have diverse phrasings, but also hallucinate facts
(`earthquake' in sampling and `torrential rains' in top-$p$ sampling).

\subsection{Loss Truncation can Outperform on Automated Metrics}

%
%

While our primary evaluation metrics are human evaluations (HUSE and factuality),
we additionally investigate automated metrics to further contextualize our
results. For summarization, we used ROUGE-L and for E2E we use BLEU score for
the automated metrics.

For summarization, the ROUGE-L scores for loss truncation and entropy-matched
top-$k$ and top-$p$ decoding were 23.2, 22.8, and 22.8 respectively. While loss
truncation does not substantially improve ROUGE-L, we see that it still
outperforms baselines. We do not expect reference-based evaluations to fully
capture the benefits of loss truncation, as these metrics encourage the models
to fully imitate the data distribution -- including invalid and hallucinated
examples.

For E2E, the BLEU scores for loss truncation and the baseline were 0.72 and 0.64
respectively. We confirmed that the baseline model for the E2E task achieves a
similar score as reported by \citet{balakrishnan2019constrained}. Perhaps
surprisingly, improving BLEU score to 0.72 almost closes the gap to using
complex tree-structured semantic representations, which achieves a BLEU score of
0.74 \cite{balakrishnan2019constrained}.

We further show that loss truncation is not sensitive to the hyperparameter
$c$ on automated metrics in Appendix~\ref{sec:sensitivity-to-c} and provide a
preliminary investigation of combining loss truncation and alternative decoders
in Appendix~\ref{sec:trunc-and-decoder}.

\section{Related Work}
\label{sec:related-work}

\minihead{Decoder-based diversity}
Researchers have proposed a variety of models for text generation
\cite{radford2019language, keskar2019ctrl, sutskever2014sequence}. These models
generate text using decoding methods such as beam search.
While beam search is generally thought of as the gold standard \cite{tillmann2003word}, it can produce generic and repetitive 
outputs \cite{holtzman2019curious}. To achieve diversity, top-$k$
\cite{fan2018hierarchical} and top-$p$ \cite{holtzman2019curious} sampling
stochastically decodes the outputs after restricting the output space
to avoid low-quality outputs.

While these techniques can improve generation quality, they rely on models
trained via log loss, which we show can result in undesired behavior that cannot
be fixed post-hoc.  Our work is complementary to existing work on decoders by
proposing a loss that can improve the probabilistic models which these decoders
operate on.

\minihead{Loss modifications}
Prior work has identified specific issues in generative models, such as
repetitiveness, and proposed loss modifications to address these specific
issues in the context of long text generation \cite{welleck2019neural, holtzman2018learning}. In contrast, we identify
an issue with the widely used log loss, and propose loss truncation, which does not require a task- and
issue-specific modification. Many of the penalties and decoding techniques
proposed in these earlier works can be combined with truncated log loss to obtain
models that are more robust to noisy references.

Contemporaneous with our work, \citet{tian2019sticking} propose an attention
weight approach to improving generation faithfulness via decoder and loss
modifications. Our work complements this by providing a conceptual basis for
improving faithfulness by ignoring examples (i.e., optimizing distinguishability), and providing a simple and general loss. We consider complex, model dependent loss truncation methods for optimizing distinguishability to be exciting future work.

Other generation methods optimize for task-specific losses \cite{och2003minimum,
shen2015minimum}. Task specific losses are not known in many cases and thus we
require an effective task-agnostic loss, e.g., log loss or TV. We
show that TV acts as a useful task-agnostic goodness of fit measure, 
and we provide an improved alternative to log loss.

\minihead{GANs}
GANs have been proposed to learn models that minimize distinguishability
\cite{li2017adversarial, rajeswar2017adversarial, dai2017towards}. While GANs
have been successful in generating images \cite{goodfellow2014generative,
brock2018large}, GANs remaining challenging to optimize for text due to the
discrete nature of text.  Our findings match earlier reports that GANs
underperform log loss trained sequence-to-sequence models
\cite{caccia2018language}. In this work, we show that better training
methods for distinguishability can arise from modifying the standard log loss
via truncation.

\minihead{Robust learning}
Robust learning is the study of learning in the face of outliers
\cite{tukey1960survey, donoho1982breakdown, huber1992robust}. Our work is
related to the $\epsilon$-contamination model, in which an $\epsilon$ fraction
of the data has been modified, potentially by an adversary
\cite{diakonikolas2018sever}. Our work shows that robust learning under log loss
can result in improved empirical performance and bounds on distinguishability.

While there are a number of effective approaches to robust learning \cite{diakonikolas2018sever, fischler1981random}, we focus on a simple truncation procedure as it is one of the only procedures scaleable enough to apply on large-scale generation datasets. Our work shows that more effective, scalable robust learning procedures can help improve natural language generation methods.

\section{Conclusion}
\label{sec:conclusion}

In this work, we show that log loss is not robust to noise, which can in turn
cause undesired behavior, such as hallucinating facts in summarization. In
response, we propose loss truncation, a robust training method that optimizes
for distinguishability of generated samples. We additionally propose a
sequence-level rejection sampling scheme to generate high quality sequences. We
show that loss truncation outperforms a range of baselines (including beam
search, top-$p$, top-$k$, and full sampling) on distinguishability. We
additionally show that rejection sampling outperforms all baselines, including
beam search, on generating factual summaries. These results suggest that robust
learning in the form of truncating the log loss can complement model-based approaches to
faithful generation by ignoring invalid and undesired references.

\bibliography{nlg-acl}
\bibliographystyle{acl_natbib}

\clearpage

\appendix

\section{Examples of Titles and Generations}
\label{sec:generation-ex}

\minihead{Examples of ground truth titles}
We present examples of titles in Figure~\ref{fig:hallucination-examples} that
require factual hallucination and can be directly entailed from context.

\begin{figure}[t!]
  \begin{subfigure}[t]{0.99\columnwidth}
    \begin{mdframed}
    \textbf{Context:} Donna Shalala is sporting a mustache to promote public
    health. \\
    \textbf{Title:} Milk on Her Lip Shalala Raises Eyebrows
    \end{mdframed}
    \caption{Example of a title that requires hallucinating new facts, e.g.,
    ``Milk on Her Lip'' and ``raises eyebrows''.}
  \end{subfigure}
  \begin{subfigure}[t]{0.99\columnwidth}
    \begin{mdframed}
    \textbf{Context:} Southwest China's Sichuan province has decided to build an
    inter-city high-tech industrial belt to serve development of Western China.
    \\
    \textbf{Title:} Sichuan to Build High-Tech Industrial Belt
    \end{mdframed}
    \caption{Example of a title that can be directly generated from the context.}
  \end{subfigure}
  \caption{Examples of titles that require hallucinating new facts and titles
  that are directly entailed from context.}
  \label{fig:hallucination-examples}
\end{figure}

\minihead{Examples of generated titles}
We present examples of titles that from rejection sampling that are selected and
that were rejected in sampling in Figure~\ref{fig:rejection-examples}. As shown,
rejected titles tend to be of lower quality.

\begin{figure}[t!]
  \begin{subfigure}[t]{0.99\columnwidth}
    \begin{mdframed}
    \textbf{Context:} At least two people have tested positive for the bird flu
    virus in Eastern Turkey, health minister Recep Akdag told a news conference
    Wednesday.
    \\
    \textbf{Ground truth:} Two test positive for bird flu virus in Turkey
    \\
    \textbf{Selected sample:} Two reported positive for bird flu in Eastern
    Turkey
    \\
    \textbf{Rejected sample:} Two officials fail to get good for bird flu in
    Eastern Turkey
    \end{mdframed}
    \caption{Example 1.}
  \end{subfigure}
  \begin{subfigure}[t]{0.99\columnwidth}
    \begin{mdframed}
    \textbf{Context:} British investment fund Fidelity has increased its stake
    in Puma, the German maker of sportswear and equipment, to just over five
    percent, Puma said on Thursday.
    \\
    \textbf{Ground truth:} Private equity firm Fidelity raises stake in Puma to over five pct
    \\
    \textbf{Selected sample:} Fidelity increases stake in Puma
    \\
    \textbf{Rejected sample:} Boost higher first-half stake in Puma says Puma
    \end{mdframed}
    \caption{Example 2.}
  \end{subfigure}
  \caption{Examples of sampled titles that were selected and rejected in
  rejection sampling at $\alpha=0.1$.}
  \label{fig:rejection-examples}
\end{figure}

\section{Proof of Lemma and Proposition}
\label{sec:proofs}

\minihead{Lemma}
We prove the lemma that all elements in $\truncset{p}$ are close to $p$ in total
variation.

\begin{lem}\label{lem:tvbound}
\[ \sup_{q_0\in \truncset{p}} |q_0 - p|_{\text{TV}} \leq c\]
\end{lem}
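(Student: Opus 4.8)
The plan is to avoid solving for $q_0$ explicitly (the naive route $q_0 = (p - c q_1)/(1-c)$ leads to a loose constant) and instead to compute the signed difference $q_0 - p$ directly from the mixture identity. Fix any $\ptrunc \in \truncset{p}$; writing $q_0 := \ptrunc$ together with the witnessing distribution $q_1$, the defining relation reads $p = (1-c)q_0 + c q_1$.

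First I would rearrange this relation by subtracting $p$ from $q_0$ and substituting the mixture:
\[ q_0 - p = q_0 - \big[(1-c)q_0 + c q_1\big] = c\,q_0 - c\,q_1 = c\,(q_0 - q_1), \]
an identity holding pointwise (equivalently, as signed measures). The entire content of the lemma sits in this one line: the gap between $q_0$ and $p$ is exactly a $c$-scaled copy of the gap between the two mixture components $q_0$ and $q_1$.

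Next I would pass to total variation. Using $|\mu - \nu|_{\text{TV}} = \tfrac{1}{2}\sum_x |\mu(x) - \nu(x)|$, the scaling identity gives
\[ |q_0 - p|_{\text{TV}} = \tfrac{1}{2}\sum_x \big|q_0(x) - p(x)\big| = \tfrac{c}{2}\sum_x \big|q_0(x) - q_1(x)\big| = c\,|q_0 - q_1|_{\text{TV}}. \]
Since $q_0$ and $q_1$ are probability distributions, their total variation distance is at most $1$, whence $|q_0 - p|_{\text{TV}} \le c$. (Equivalently, via the supremum-over-events form of TV: for every event $A$ one has $q_0(A) - p(A) = c\,(q_0(A) - q_1(A)) \in [-c, c]$.) Because the bound $c$ depends neither on the particular $q_0$ nor on its witness $q_1$, it holds uniformly, and taking the supremum over $q_0 \in \truncset{p}$ yields the claim.

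There is essentially no analytic obstacle here; the only genuine pitfall is algebraic, namely isolating $q_0$ and bounding term by term, which squanders the cancellation and produces the weaker constant $c/(1-c)$. Writing the difference in the factored form $c\,(q_0 - q_1)$ sidesteps this entirely. I would also remark that the constant is tight — taking $p$ uniform on two points and letting $q_0, q_1$ be the two point masses attains $|q_0 - p|_{\text{TV}} = c$ — so the bound cannot be improved.
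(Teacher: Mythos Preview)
Your proof is correct and follows exactly the same route as the paper: rearrange the mixture identity to $q_0 - p = c(q_0 - q_1)$, then bound $|q_0 - q_1|_{\text{TV}} \le 1$. The paper's version is a one-line compression of what you wrote; your added remarks on the tightness of the constant and the pitfall of the $c/(1-c)$ bound are helpful but go beyond what the paper records.
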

\begin{proof}
  By definition of $\truncset{p}$, for any $q_0$ there exists a $q_1$ such that
  $p = c q_1 + (1 - c) q_0$ so,
\[
  |q_0 - p|_{\text{TV}}= \left|c q_0 - c q_1\right|_{\text{TV}}\leq c
\]
  \end{proof}

\minihead{Proposition}
We prove that the truncated log loss bounds total variation.

\begin{proof}
  \begin{align}
&|\pmodel - \pref|_{\text{TV}}^2\\
  &\leq (|\pmodel - \ptrunc|_{\text{TV}} + |\ptrunc - \pref|_{\text{TV}})^2 \\
  &\leq \frac{1}{2} \KL(\ptrunc || \pmodel) + 2c + c^2 \label{eq:bound}
  \end{align}
  which follows from the triangle inequality, Pinsker's inequality, and using Lemma~\ref{lem:tvbound} to bound the remaining terms by $c$.
\end{proof}

\section{Hyperparameters}
\label{sec:hyperparameters}

\minihead{Summarization model hyperparameters}
We used a standard OpenNMT-py model with global attention for all
sequence-to-sequence experiments \cite{klein2017opennmt}. It has a single LSTM
layer in the encoder and two in the decoder.

For the baseline model, we train for 200,000 steps with SGD and an initial
learning rate of $1$. For the loss truncated model, we hotstart with 100,000
minibatch updates and subsequently with 100,000 minibatch updates with the
truncated loss with an initial learning rate of $0.1$.

\minihead{$k$ and $p$ selection}
A key parameter in top-$k$ and top-$p$ sampling are $k$ and $p$ respectively.
These parameters trade off between diversity and quality. To select these
values, we chose values of $k$ and $p$ that had similar entropies to our model
trained with loss truncation.

Specifically, $k=100$ and $p=0.9$ matched loss truncation at $c=0.6$ for
summarization (entropies of $18.08$, $20.01$, and $17.93$ respectively).
$k=2$ and $p=0.4$ matched rejection sampling for summarization at $c=0.6, \alpha=0.1$
(entropies of $3.71$, $4.02$, and $3.84$ respectively).

\section{Crowd Worker Setup and Prompts}
\label{sec:worker-prompts}

\minihead{Crowdsourcing setup}
For all human evaluations, we used Amazon Mechanical Turk (all prompts
shown below). We sampled 312 context/title pairs to measure HUSE. For each
generated title, we asked 9 crowd workers to measure the typicality of the
generated title, as in \citet{hashimoto2019unifying}.  Each crowd worker responded to
24 generated titles.

For measuring factuality, we sampled 312 examples and for each example, we
asked two crowd workers how much information in the generated title was present in the article.

\begin{figure}
  \begin{subfigure}{\columnwidth}
    \includegraphics[width=\columnwidth]{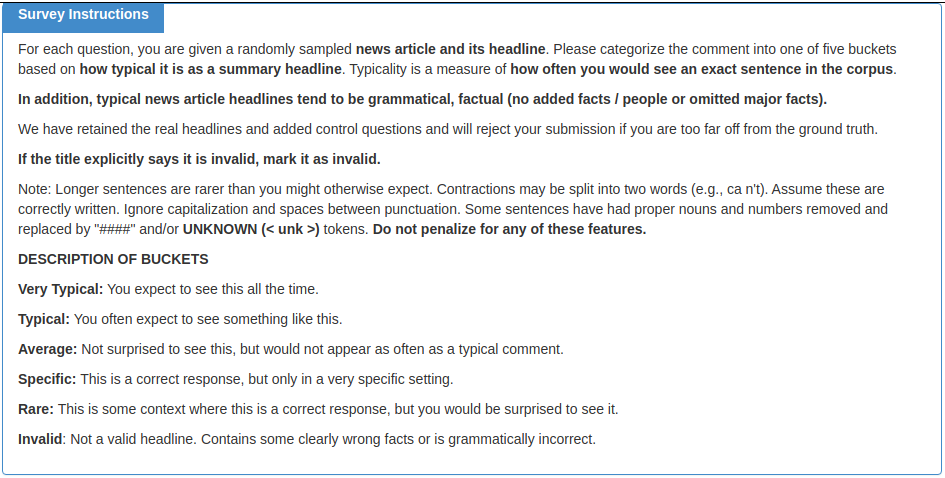}
    \caption{Prompt for measuring HUSE.}
  \end{subfigure}
  \begin{subfigure}{\columnwidth}
    \includegraphics[width=\columnwidth]{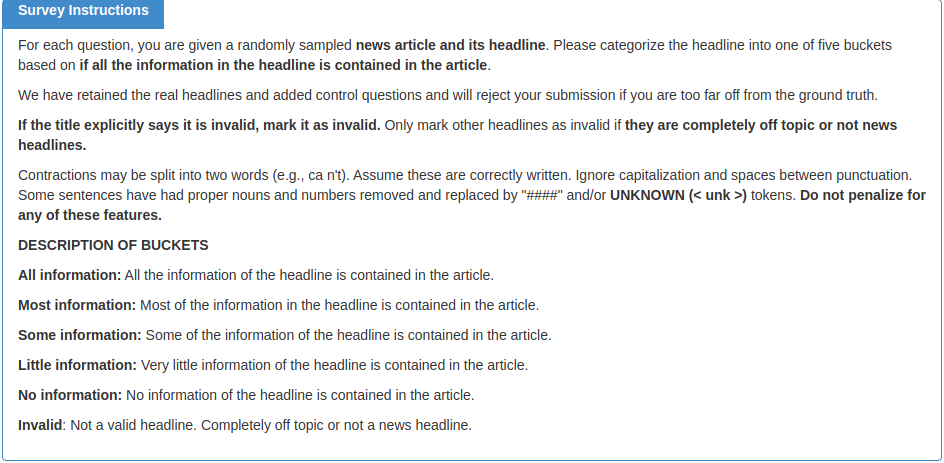}
    \caption{Prompt for measuring factuality.}
  \end{subfigure}
  \caption{Prompts for measuring HUSE and factuality.}
  \label{fig:mturk-prompts}
\end{figure}

\minihead{Prompts}
We show crowd worker prompts for measuring HUSE and factuality in
Figure~\ref{fig:mturk-prompts}. The HUSE prompt was directly taken from
\citet{hashimoto2019unifying} with an extra control.

\section{Further experiments}
\subsection{Sensitivity to $c$}
\label{sec:sensitivity-to-c}

\begin{table}[t!]
  \centering
  \begin{tabular}{ll}
    Condition & ROUGE-L \\ \hline \hline
    Truncation, $c=0.9$ & 24.3 \\
    Truncation, $c=0.8$ & \textbf{24.9} \\
    Truncation, $c=0.7$ & 24.0 \\
    Truncation, $c=0.6$ & 23.2 \\
    \hline
    top-$k=100$         & 22.8 \\
    top-$p=0.9$         & 22.8
  \end{tabular}
  \caption{ROUGE-L scores for loss truncation at various $c$ and
  entropy-matched top-$k$ and top-$p$ decoding for summarization. As shown, loss
  truncation outperforms on ROUGE-L for a range of $c$.}
  \label{table:summ-vary-c}
\end{table}

\begin{table}[t!]
  \centering
  \begin{tabular}{ll}
    Condition & BLEU \\ \hline \hline
    Truncation, $c=0.9$ & \textbf{0.72} \\
    Truncation, $c=0.8$ & 0.71 \\
    Truncation, $c=0.7$ & 0.70 \\
    Truncation, $c=0.6$ & 0.69 \\
    Truncation, $c=0.5$ & 0.69 \\
    \hline
    Baseline            & 0.64 \\
    \textbf{0.72} & 0.64
  \end{tabular}
  \caption{BLEU scores for loss truncation at various $c$ and the
  baseline model on the E2E task. As shown, loss truncation outperforms
  the baseline on BLEU score at a range of hyperparameters.}
  \label{table:e2e-vary-c}
\end{table}

We investigate the sensitivity of loss truncation to the hyperparameter $c$. To
do so, we vary $c$ and measure ROUGE-L and BLEU scores, for summarization and
E2E respectively.

We show results for summarization in Table~\ref{table:summ-vary-c} and E2E in
Table~\ref{table:e2e-vary-c} along with baselines. As shown, truncation
outperforms on automated metrics on a variety of hyperparameter settings on
automated metrics.  We leave a full investigation of sensitivity to $c$ as
future work.

\subsection{Combining Loss Truncation and Decoders}
\label{sec:trunc-and-decoder}

\begin{table}
  \centering
  \begin{tabular}{ll}
  Condition & ROUGE-L \\ \hline \hline
  Log-loss, beam & \textbf{41.4} \\
  Log-loss, full sampling & 27.9 \\
  \hline
  Truncation, top-$k=100$ & 33.4 \\
  Truncation, top-$k=2$   & 38.9 \\
  Truncation, top-$p=0.9$ & 35.1 \\
  Truncation, top-$p=0.1$ & 40.9
  \end{tabular}
  \caption{Loss truncation combined with top-$k$ and top-$p$ decoding.}
  \label{table:trunc-decoders}
\end{table}

As loss truncation is a training method, it can be combined with alternative
methods of decoding at inference time. As such, we perform a preliminary
investigation of using top-$k$ and top-$p$ decoding with loss truncation.

We show ROUGE-L of loss truncation combined with various decoders and baselines
for summarization in Table~\ref{table:trunc-decoders}. As shown, top-$k$ and
top-$p$ decoding work with loss truncation and can improve sample quality.

\end{document}